\newcommand{\R}{\mathbb{R}}
\newcommand{\N}{\mathbb{N}}
\newcommand{\norm}[1]{\lVert#1\rVert}
\newcommand*\diff{\mathop{}\!\mathrm{d}}
\newcommand{\var}{\mathrm{var}}
\newtheorem{lemma}{Lemma}
\newtheorem{theorem}{Theorem}
\newtheorem{remark}{Remark}
\newtheorem{assumption}{Assumption}
\numberwithin{equation}{section}
\begin{document}

\title{Heat Death of Generative Models in Closed-Loop Learning}


\author{Matteo Marchi}
\address{Matteo Marchi}
\curraddr{}
\email{matmarchi@ucla.edu}
\thanks{}

\author{Stefano Soatto}
\address{Stefano Soatto}
\curraddr{}
\email{soatto@ucla.edu}
\thanks{}

\author{Pratik Chaudhari}
\address{Pratik Chaudhari}
\curraddr{}
\email{pratikac@seas.upenn.edu}
\thanks{}

\author{Paulo Tabuada}
\address{Paulo Tabuada}
\curraddr{}
\email{tabuada@ee.ucla.edu}
\thanks{}

\date{}

\dedicatory{}

\begin{abstract}
    Improvement and adoption of generative machine learning models is rapidly accelerating, as exemplified by the popularity of LLMs (Large Language Models) for text, and diffusion models for image generation.As generative models become widespread, data they generate is incorporated into shared content through the public web. This opens the question of what happens when data generated by a model is fed back to the model in subsequent training campaigns. This is a question about the stability of the training process, whether the distribution of publicly accessible content, which we refer to as ``knowledge'', remains stable or collapses.
    
    Small scale empirical experiments reported in the literature show that this closed-loop training process is prone to degenerating. Models may start producing gibberish data, or sample from only a small subset of the desired data distribution (a phenomenon referred to as mode collapse). So far there has been only limited theoretical understanding of this process, in part due to the complexity of the deep networks underlying these generative models.
    
    The aim of this paper is to provide insights into this process (that we refer to as ``generative closed-loop learning'') by studying the learning dynamics of generative models that are fed back their own produced content in addition to their original training dataset. The sampling of many of these models can be controlled via a ``temperature'' parameter. Using dynamical systems tools, we show that, unless a sufficient amount of external data is introduced at each iteration, any non-trivial temperature leads the model to asymptotically degenerate. In fact, either the generative distribution collapses to a small set of outputs, or becomes uniform over a large set of outputs.
\end{abstract}

\maketitle

\section{Introduction}
Generative models have exploded in popularity in recent years, primarily driven by the adoption of diffusion models~\cite{cao2024survey} for image generation, and so-called LLMs (Large Language Models)~\cite{zhao2023survey} for textual generation. With this explosion, came renewed concerns about AI, especially tied to the \emph{generative} nature of these models.
{\color{black} Large scale} neural networks underlie most of these models,{\color{black} including, for example, Llama 2 which} is trained on 2 trillion tokens~\cite{touvron2023llama}. As these models generate data that is published on the internet, they pollute their own training datasets with synthetic data, possibly leading to a spiraling decay of the quality of these models and of the internet by extension.

We are concerned with the setting where a generative model is iteratively trained, and the outcome of each iteration is dependent on the current data distribution encoded by the model (typically by including samples generated by the model in the training set). Serious concerns about decay of such a training process arose first in GANs (Generative Adversarial Networks), where the problem of ``mode collapse''~\cite{thanh2020catastrophic} was identified. Analogous issues seem to be a general feature of violating distributional assumptions about the training dataset, even for non-GAN models. One way of framing such violations is as ``data poisoning''~\cite{biggio2012poisoning}, a problem that is likely to become more common, as models trained from public domain internet data are especially susceptible to data poisoning attacks~\cite{carlini2023poisoning}. This is also related to the notion of ``distribution shift''~\cite{koh2021wilds}, although most existing work focuses on the distribution shift occurring at test time and coming from an external source. In our setting the shift occurs at \emph{training time} and has an \emph{internal} origin.

As this is such a new development, there is still only partial understanding of the phenomenon, and much published work is empirical in nature. In~\cite{martinez2023combining} and~\cite{martinez2023towards}, the authors train image diffusion models, iteratively including synthetic samples, and show significant degradation of the quality of the produced images. In~\cite{shumailov2023curse}, it is shown, both theoretically and experimentally, that generative Gaussian models undergo degenerative collapse. A case of closed-loop learning when the sampling of the model is biased (samples may be taken closer to the mean) under a variety of synthetic data policies is studied in~\cite{alemohammad2023self}. In their results, non-degeneration could be ensured only by introducing a sufficient fraction of fresh data at each training iteration. This aligns with the results in~\cite{bertrand2023stability}, where the authors establish (theoretically and experimentally) that maintaining a high enough fraction of fresh data is a sufficient condition to prevent degeneration.

Most generative models include a way to modulate their sampling probabilities through ``temperature'', typically as a way to make the outputs more or less random. In this work, we focus on the effect of temperature on the closed-loop learning dynamics of generative models, a perspective that received little attention so far. In particular: 1) We define a class of ``generative closed-loop learning models with temperature'' that captures many real-world scenarios. 2) We perform a theoretical analysis of the resulting closed-loop learning dynamics, and establish that modulating sampling with temperature leads to degeneration of the learning process. 3) We characterize the type of degeneration depending on one of three possible temperature regimes. As the models degenerate (for any amount of temperature modulation), so do their datasets, consequently losing any knowledge they originally contained, {\color{black} if not explicitly preserved and re-introduced}. When applied to the internet, this predicts that  {\color{black} unless a copy of the pre-generative-models Internet is preserved,} eventually no model will be able to be trained effectively using the internet as a data source.
Our results share some similarities with~\cite{alemohammad2023self, bertrand2023stability}, and are compatible with their conclusions, but in contrast to those papers we use tools and techniques from dynamical and control systems for the analysis.

\section{Notation}
\begin{itemize}
    \item We denote by $e_i$ the $i$-th element of the standard basis of $\R^n$, i.e., the vector of all zeroes except for a one in its $i$-th entry.

    \item The symbol $\mathbf{1}$ denotes the vector $x\in\R^n$ with all elements equal to one.
    
    \item We define $\Delta^n$ as the $n$-dimensional probability simplex: $$\Delta^n = \left\{x \in \R^n ~|~ \sum_{i=1}^n x_i = 1, x_i \ge 0\right\},$$ and its restriction to strictly positive probabilities as $\Delta^n_{>0}$. An element of $\Delta^n$ is called a ``probability vector''. The boundary of the simplex is denoted by $\partial \Delta^n$. 

    \item Given some $X\in\Delta^n$, we say that the random variable $Y$ is sampled according to $X$, or $Y\sim X$, to mean that for all $i\in\left\{1,2,\dots,n\right\}$ we have $ \operatorname P\left(Y = e_i\right) = X_i. $

    \item If $X(k)$ is a stochastic process, $\mathcal F_k$ denotes the filtration adapted to the stochastic process up to time $k$. We say an event happens a.s. to mean ``almost surely'', i.e., with probability 1 (w.p. 1).

    \item Unless otherwise noted, $\norm{\cdot}$ denotes the usual vector 2-norm over $\R^n$, and $d(x,y) = \norm{x-y}$ with $x,y\in\R^n$ is the distance between $x$ and $y$. If one of the arguments is a set $\Omega\subseteq\R^n$, it denotes the distance from a point to that set $d(x, \Omega) = \inf_{y\in \Omega} d(x,y)$.

    \item The notation $f(x)\xrightarrow[x\to a]{} \Omega$, with $\Omega$ a set means $\lim_{x\to a} d\left(f(x), \Omega\right) = 0$.

    \item We normally use capitalized letters to denote random variables, and lower-case when they are deterministic, or when the randomness is not relevant (i.e., $X$ vs $x$).

    \item A continuous function $\alpha:[0,a)\to\R_{\ge 0}$, with $a\in\R_{\ge 0}\cup \{+\infty\}$, is said to be of class kappa ($\alpha\in\mathcal K$) if it is strictly increasing and $\alpha(0)=0$.
\end{itemize}

\section{Generative Closed-Loop Learning}\label{sec:model}
We describe a generative model as a parameterized family of probability distributions over a finite set of $n\in\N$ possible elements $\mathcal Y = \left\{\mathcal Y_1, \mathcal Y_2, \dots, \mathcal Y_n\right\}$\footnote{This family is a subset of the set of categorical distributions over $n$ categories. In general we do not require that the family of distributions expressible by the model is the full set of categorical distributions, which is unrealistic for very high $n$ (for example, if the outcomes $\mathcal Y$ are rgb-images).}. We denote such family as $\phi:\R^p\to\Delta^n$, a map from a parameter vector $w\in\R^p$ to a probability vector $\phi(w)\in\Delta^n$ for the elements $\mathcal Y$. These are the outputs that the model can generate when sampled. Without loss of generality, we identify each $\mathcal Y_i$ with the $i$-th vector of the standard basis of $\R^n$. These elements can be interpreted differently depending on the specific generative model, e.g., for a language model each $\mathcal Y_i$ could be a word, token, sentence, or sentence class from a large but finite set.

\subsection{Model sampling with temperature control}
For a trained generative model, letting $Y$ be the output of the model when sampled, we denote by $\Theta = \phi(w)$ the ``nominal'' probability of generating each of the possible elements of $\mathcal Y$.  Specifically, the probability of generating the $i$-th element corresponds to the $i$-th entry of the vector $\Theta$. However, when sampled, the actual generation probabilities are filtered through a \emph{temperature} function \mbox{$\tau : \Delta^n \to \Delta^n$}. Therefore, for $i\in\left\{1,2,\dots,n\right\}$, the sampled output $Y\in\mathcal Y$ satisfies:
\begin{equation}
    P\left(Y = \mathcal Y_i\right) = \tau\left(\Theta\right)_i,
\end{equation}
where a subscripted index $i$ denotes the $i$-th vector element. For our results to hold, we require the temperature function to satisfy some assumptions (see Sec.~\ref{sec:concrete}). We show that these assumptions hold for the temperature function induced by the \emph{softmax} operation, typically used in deep learning.

\subsection{Learning process}
{\color{black} We use the term ``generative closed-loop learning'', or just ``closed-loop learning'', to refer to a generative model trained on data that includes its own output from prior runs.}
When a generative model learns from its own output, the probability vector $\Theta$ becomes a stochastic process $\Theta(k)$ evolving over (discrete) time $k\in\mathbb Z_{\ge 0}$. We assume that a model is initially trained on some externally provided dataset of some size $\ell\in\N$:
$$D(\ell) = \left\{Y(1), Y(2), \dots, Y(\ell)\right\},$$
where we use $Y(k)$ with $k\le \ell$ to denote the externally provided data, i.e.,  training only starts at time $k=\ell$. Similarly, for each time $k \ge \ell$ we have a parameter vector $w(k)$ and its associated probability vector $\Theta(k) = \phi(w(k))$. Finally, let the training be represented by a (in general stochastic) function\footnote{While the retraining function here takes the current parameters as an argument, it can also represent a form of retraining where the model is ``reset'' and trained from scratch over a new dataset by ignoring $w$.} $f$ that maps a parameter vector $w$ and training data $D$ to a ``retrained'' parameter $ f(w, D)$. Then, for each time $k\in\mathbb Z_{\ge 0}$, $k \ge \ell$, the closed-loop learning stochastic process unfolds as follows:
\begin{equation}\label{eq:self-learning_model}
    \begin{aligned}
        Y(k+1) &\sim \tau\left(\Theta(k)\right)\\
        D(k+1) &= D(k) \cup \{Y(k+1)\}\\
        w(k+1) &= f(w(k), D(k+1))\\
        \Theta(k+1) &= \phi(w(k+1)),
    \end{aligned}
\end{equation}
where $D(k) \cup \{Y(k+1)\}$ models ``adding'' the generated output sample to the current set of training data\footnote{For notational simplicity, in~\eqref{eq:self-learning_model}, the process retrains the model after each generated sample, however our results hold even in the case where some variable but bounded number of samples $N \ge 1$ is generated and added to the dataset before retraining.}. For some initial dataset of $\ell$ samples, the process has the initial conditions $w(\ell) = f(w_0, D(k))$, and $\Theta(\ell) = \phi(w(\ell))$ for some $w_0\in\R^p$. The recursive process~\eqref{eq:self-learning_model} induces a probability distribution for each $\Theta(k)$. Like for the temperature function $\tau$, in Sec.~\ref{sec:concrete} we will require that the process~\eqref{eq:self-learning_model} satisfies some general properties.

\subsection{Problem statement}
Given the closed-loop learning process~\eqref{eq:self-learning_model}, we want to know what are the long term properties of the probability vector $\Theta(k)$, i.e.,  what is the asymptotic behavior of $\Theta(k)$ as time increases?  
Since $\Theta(k)$ describes the probability of generated data, the asymptotic behavior of $\Theta(k)$ determines the ultimate composition of the dataset $D(k)$ as well. For example, if $\Theta(k)$ were to converge to a point independent of the initial dataset $D(\ell)$, any initial knowledge encoded by the dataset is eventually lost.

\section{A common class of models}\label{sec:concrete}
In the previous section we presented an abstracted notion of closed-loop learning. We now give specific conditions on the temperature function $\tau$ and the behavior of the training algorithm represented by $f$ and $\phi$ in~\eqref{eq:self-learning_model}, and show that they are realistic for common closed-loop learning models.

\subsection{Temperature}
We assume that the class of temperature functions $\tau$ as defined in Sec.~\ref{sec:model} satisfies a few properties:
\begin{assumption}\label{ass:temperature}
    The temperature function $\tau:\Delta^n\to\Delta^n$ in~\eqref{eq:self-learning_model} is assumed to satisfy the following properties:
    \begin{enumerate}
        \item It is continuous and strictly element-wise order preserving, i.e., for any $\theta\in\Delta^n$ and $i,j\in\{1,2,\dots,n\}$:
        \begin{equation}\label{eq:order}
            \begin{gathered}
                \theta_i < \theta_j \implies \tau(\theta_i) < \tau(\theta)_j\\
                \theta_i = \theta_j \implies \tau(\theta_i) = \tau(\theta_j).
            \end{gathered}
        \end{equation}

        \item Given an index set $I\subseteq\{1,2,\dots,n\}$, let $V_I:\Delta^n\to\R_{\ge 0}$ be defined as\footnote{$V_I$ will be used as a Lyapunov function later in the analysis.}:
        \begin{equation}\label{eq:lyapunov}
            V_I(\theta) = \max_{i\in I} \left\{\frac{\theta_i}{\sum_{j\in I}\theta_j}\right\}-\min_{i\in I}\left\{\frac{\theta_i}{\sum_{j\in I}\theta_j}\right\}.
        \end{equation}    
        Then, $\tau$ satisfies exactly one of the properties:
        \begin{enumerate}
            \item It is the identity: $\tau(\theta) = \theta$.
            \item For any index set $I\subseteq\{1,2,\dots,n\}$ where $\min_{i\in I}\theta_i > 0$ and $\max_{i\in I}\theta_i > \min_{i\in I}\theta_i$:
            $$V_I(\tau(\theta)) - V_I(\theta) < 0.$$
            \item For any index set $I\subseteq\{1,2,\dots,n\}$ where $\min_{i\in I}\theta_i > 0$ and $\max_{i\in I}\theta_i > \min_{i\in I}\theta_i$:
            $$V_I(\tau(\theta)) - V_I(\theta) > 0.$$
        \end{enumerate}
    \end{enumerate}
\end{assumption}
Intuitively, case 2.b represents a ``contracting'' $\tau$, and case 2.c an ``expanding'' $\tau$. The function $V_I$ quantifies how close a probability vector $\theta\in\Delta^n$ is to uniform when conditioned to a specific subset of variables. Note that when an index set $I$ includes all non-zero elements of $\theta$, the renormalizing term $\sum_{j\in I}\theta_j$ in~\eqref{eq:lyapunov} is equal to one, and~\eqref{eq:lyapunov} reduces to $V_I(\theta) = \max_{i\in I} \theta_i - \min_{i\in I}\theta_i$.

The notion of temperature typically used in generative models satisfies the requirements listed above. In fact, this is the case for the \emph{softmax} temperature, as we now show. Many machine learning models do not directly output a set of probabilities, but a vector of so-called \emph{logits} $z\in\R^n$ that is converted into a probability vector via the $\operatorname{softmax}$ function and a positive temperature parameter $T>0$ as follows:
\begin{equation}\label{eq:softmax}
\begin{aligned}
    \theta_T &=  \operatorname{softmax}\left(zT^{-1}\right)\\
    &= \frac{1}{\sum_{i=1}^n \exp\left(\frac{z_i}{T}\right)}
    \begin{bmatrix}
        \exp\left(\frac{z_1}{T}\right) & \dots & \exp\left(\frac{z_n}{T}\right)
    \end{bmatrix}^{\top}.
\end{aligned}
\end{equation}
Note that $\tau$, as defined in Sec.~\ref{sec:model}, is a map between probability vectors, but~\eqref{eq:softmax} maps logits to probabilities. Consider a logit vector $z\in\R^n$. While~\eqref{eq:softmax} is not a valid $\tau$, it induces a unique map transforming $\theta = \operatorname{softmax}(z)$ (the ``nominal'' probabilities associated to $z$) to $\theta_T = \operatorname{softmax}\left(T^{-1}z\right)$ (the ``temperature filtered'' probabilities associated to the same $z$). Defining $Z=\sum_{i=1}^n \exp(z_i)$ we have:
\begin{equation}\label{eq:softmax_tau}
    \begin{aligned}
        \theta_T &= \operatorname{softmax}\left(zT^{-1}\right)\\
        &= \operatorname{softmax}\left(T^{-1}\left(\begin{bmatrix}
            \ln\left(\theta_1\right) & \dots & \ln\left(\theta_n\right)
        \end{bmatrix}^\top + \mathbf 1\ln(Z)\right)\right)\\
        &= \operatorname{softmax}\left(T^{-1}\begin{bmatrix}
            \ln\left(\theta_1\right) & \dots & \ln\left(\theta_n\right)
        \end{bmatrix}^\top\right)\\
        &= \frac{1}{\sum_{i=1}^n \theta^{\frac{1}{T}}}\begin{bmatrix}
            \theta_1^{\frac{1}{T}} & \dots & \theta_n^{\frac{1}{T}}
        \end{bmatrix}^\top,
    \end{aligned}
\end{equation}
where the third equality holds because $\operatorname{softmax}$ is invariant to addition of the same constant ($T^{-1}\ln(Z)$) to all input entries. This induced map $\tau(\theta) = \theta_T$ satisfies exactly one of the three requirements previously introduced. We formalize this in the following Lemma:
\begin{lemma}
    Consider the function $\tau:\Delta^n\to\Delta^n$ defined by $\tau(\theta) = \operatorname{softmax}\left(T^{-1}\ln(\theta)\right)$, with $T\in\R_{>0}$. The function $\tau$ satisfies Assumption~\ref{ass:temperature}. In particular, it satisfies properties 2.a, 2.b, and 2.c for $T=1$, $T>1$, and $T<1$ respectively.
\end{lemma}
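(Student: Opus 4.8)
The plan is to use the explicit closed form $\tau(\theta)_i = \theta_i^{1/T}/\sum_{j=1}^n \theta_j^{1/T}$ derived in~\eqref{eq:softmax_tau}, and to verify the two parts of Assumption~\ref{ass:temperature} in turn. Part~1 is immediate: the scalar map $x\mapsto x^{1/T}$ is continuous on $[0,\infty)$ and strictly increasing for every $T>0$, the denominator $\sum_j\theta_j^{1/T}$ is strictly positive on $\Delta^n$ (some coordinate is at least $1/n$), so $\tau$ is continuous; and $\theta_i<\theta_j$ forces $\theta_i^{1/T}<\theta_j^{1/T}$, which survives division by the common positive normalizer, with equality preserved likewise. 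For $T=1$ the form collapses to $\tau(\theta)_i=\theta_i/\sum_j\theta_j=\theta_i$, giving property~2.a.

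The core is properties 2.b and 2.c. My first step would be a reduction that removes the conditioning from $V_I$. Fix an index set $I$ and write $p_i=\theta_i/\sum_{j\in I}\theta_j$ for the conditional distribution on $I$. A short computation shows tempering commutes with conditioning: the common factor $(\sum_{j\in I}\theta_j)^{1/T}$ cancels, so
\begin{equation*}
\frac{\tau(\theta)_i}{\sum_{k\in I}\tau(\theta)_k}=\frac{\theta_i^{1/T}}{\sum_{k\in I}\theta_k^{1/T}}=\frac{p_i^{1/T}}{\sum_{k\in I}p_k^{1/T}}.
\end{equation*}
Because $x\mapsto x^{1/T}$ is increasing, the maximizing and minimizing indices are unchanged by $\tau$. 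Hence, setting $s=1/T$, $a=\max_{i\in I}p_i$, $b=\min_{i\in I}p_i$ and $N(s)=\sum_{i\in I}p_i^s$, I can write $V_I(\tau(\theta))=g(s)$ and $V_I(\theta)=g(1)$, where $g(s)=(a^s-b^s)/N(s)$. The hypotheses $\min_{i\in I}\theta_i>0$ and $\max>\min$ translate exactly into $0<b<a$.

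With this reduction, 2.b and 2.c both follow from showing that $g$ is strictly increasing in $s$, since $T>1\Leftrightarrow s<1$ gives $g(s)<g(1)$ (contraction) and $T<1\Leftrightarrow s>1$ gives $g(s)>g(1)$ (expansion). Differentiating, the sign of $g'(s)$ is the sign of its numerator, which I would rearrange coordinate-wise into
\begin{equation*}
\sum_{k\in I}p_k^s\left[a^s\ln\frac{a}{p_k}+b^s\ln\frac{p_k}{b}\right].
\end{equation*}
Since $b\le p_k\le a$ for every $k\in I$, both logarithms are nonnegative, so each summand is nonnegative; and the term at the maximizing index equals $a^s b^s\ln(a/b)>0$ because $a>b$. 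Thus $g'(s)>0$ throughout, which closes the argument.

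The one genuine obstacle is the normalizer $N(s)$: it couples all coordinates, so a head-on comparison of $(a^s-b^s)/N(s)$ with $a-b$ does not have an obviously determined sign, and trying to bound it would entangle the bulk coordinates of $p$. The device that defuses this is to pass to the derivative in $s$ rather than compare the two temperatures directly; after that the numerator factors termwise and every coordinate's contribution is manifestly of one sign. An alternative route is to observe that tempering with $s>1$ produces a distribution majorizing $p$ while $s<1$ produces one majorized by $p$, and to invoke Schur-convexity of the range $V_I$; but securing the \emph{strict} inequality there is fiddlier than the direct monotonicity computation above.
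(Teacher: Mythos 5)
Your proof is correct and follows essentially the same route as the paper's: reduce $\tau$ to the explicit power form $\theta_i^{1/T}/\sum_j\theta_j^{1/T}$, dispose of part 1 and the $T=1$ case directly, and establish 2.b/2.c by showing $V_I(\tau(\theta))$ is strictly monotone in the temperature parameter via a derivative whose numerator rearranges into manifestly signed terms. The only cosmetic differences are that you differentiate in $s=1/T$ rather than $T$ and group the numerator coordinate-wise instead of as two convex combinations, and your explicit observation that tempering commutes with conditioning onto $I$ is a slightly cleaner justification of a cancellation the paper uses implicitly.
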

\begin{proof}
    Since $\tau$ takes the form~\eqref{eq:softmax_tau}, and $x\mapsto x^{\frac{1}{T}}$ is a continuous strictly monotone increasing function, it is immediate that $\tau$ is also continuous (the denominator in~\eqref{eq:softmax_tau} is always bounded away from zero) and preserves the order of the elements of $\theta$. Further, note that if $\theta_i=0$, then $\tau(\theta)_i = 0$. 
    
    For the case $T=1$, it is immediate to see that $\tau$ becomes the identity function (Note that $\sum_{i=1}^n \theta_i = 1$). 
    
    For the cases where $T \ne 1$, let $I\subseteq\{1,2,\dots,n\}$ be as in Assumption~\ref{ass:temperature}, then, $V_I(\tau(\theta)) < V_I(\theta)$. To see that this is true, let $M,m\in I$ be respectively the (not necessarily unique) indices of the greatest and smallest non-zero elements of $\theta$, and consider the derivative with respect to the temperature parameter $T$ of $V_I(\tau(\theta))$:
    \begin{equation*}
        \begin{aligned}
            \frac{\partial}{\partial T}V_I(\tau(\theta)) &= \frac{\partial}{\partial T} \left\{\frac{\theta^{\frac{1}{T}}_M}{\sum_{j\in I}\theta_j^{\frac{1}{T}}}-\frac{\theta_m^{\frac{1}{T}}}{\sum_{j\in I}\theta_j^{\frac{1}{T}}}\right\}\\
            &= - T^{-2} \Bigg[ \frac{\theta_M^{\frac{1}{T}}}{\sum_{j\in I} \theta_j^{\frac{1}{T}}} \sum_{i\in I} \Bigg( \frac{\theta_i^{\frac{1}{T}}}{\sum_{j\in I} \theta_j^{\frac{1}{T}}} \left(\log(\theta_M) - \log(\theta_i)\right) \Bigg)\\&\qquad\qquad -\frac{\theta_m^{\frac{1}{T}}}{\sum_{j\in I} \theta_j^{\frac{1}{T}}} \sum_{i\in I} \Bigg( \frac{\theta_i^{\frac{1}{T}}}{\sum_{j\in I} \theta_j^{\frac{1}{T}}} \left(\log(\theta_m) - \log(\theta_i)\right) \Bigg)\Bigg].
        \end{aligned}
    \end{equation*}
    The sums are convex combinations of non-negative terms for the first and non-positive for the second summation, as $\log(\theta_M) \ge \log(\theta_i)$ and $\log(\theta_m) \le \log(\theta_i)$ for all $i\in I$. Further, by Assumption~\ref{ass:temperature} $\max_{i\in I}\theta_i > \min_{i\in I}\theta_i$, therefore the sums are strictly positive and strictly negative respectively, and $\frac{\partial}{\partial T}V_I(\tau(\theta)) < 0$. Then, fixing some specific temperature $\overline T$:
    \begin{equation}\label{eq:integral}
        V_I(\tau(\theta)) - V_I(\theta) = \int_{T=1}^{\overline T} \frac{\partial}{\partial T}V_I(\tau(\theta)) \diff T,
    \end{equation}
    where~\eqref{eq:integral} is negative for $\overline T > 1$, and positive for $\overline T < 1$.
    
\end{proof}

\subsection{Closed-loop learning}
For a training dataset $D(\ell) = \left\{Y(1), Y(2), \dots, Y(\ell)\right\}$, ``learning'' a generative model is usually framed in a maximum-likelihood sense, i.e., we want to find the set of parameters $w\in\R^p$ whose associated probability vector $\Theta = \phi(w)$ maximizes the log-probability of the observed data:
{\color{black}\begin{equation}\label{eq:max_likelihood}
\begin{aligned}
    w^* &= \arg\max_{w\in\R^p}~ -\frac{1}{\ell}\sum_{i=1}^\ell \sum_{j=1}^n \mathds{1}_{Y(i) = \mathcal Y_j}\log\left(\phi(w)_j\right) \\
    &= \arg\max_{w\in\R^p}~ -\sum_{j=1}^n \left(\frac{1}{\ell}\sum_{i=1}^\ell \mathds{1}_{Y(i) = \mathcal Y_j}\right)\log\left(\phi(w)_j\right) \\
    &= \arg\max_{w\in\R^p}~ - \sum_{j=1}^n \Theta^*_j \log\left(\phi(w)_j\right)\\
    &= \arg\max_{w\in\R^p}~ H\left(\Theta^*, \phi(w)\right),
\end{aligned}
\end{equation}
where $\mathds{1}_{(\cdot)}$ is the indicator function (takes value 1 if the subscript expression is true and 0 otherwise), $H$ is the cross-entropy between probability vectors, and $\Theta^* = \frac{1}{\ell}\sum_{i=1}^\ell Y(i)$ is the ``empirical'' probability vector associated to the data $D$ (remember that with no loss of generality we take $\mathcal Y_j$ to be the $j$-th standard basis element of $\R^n$).} Therefore, although usually not expressed in this way, learning the model is also equivalent to minimizing the cross entropy with respect to $\Theta^*$. Note that if $\phi$ is surjective (where its codomain is $\Delta^n$), there always exists a $w^*$ such that $\phi(w^*) = \Theta^*$ and is thus the optimal solution of~\eqref{eq:max_likelihood}.

When $\phi$ is surjective, $\Theta = \Theta^*$, thus we study the behavior of $\Theta^*$ under the closed-loop learning dynamics. Consider the process~\eqref{eq:self-learning_model}, and let $\Theta^*(k) = \frac{1}{k}\sum_{i=1}^k Y(i)$ be the empirical probability vector corresponding to the dataset at time $k$, $D(k)$. When a new sample $Y(k+1)$ is generated by the model, $\Theta^*$ evolves as:
\begin{equation}
\begin{aligned}
    \Theta^*(k+1) &= \frac{1}{k+1}\sum_{i=1}^{k+1} Y(i)\\
    &= \frac{k}{k+1}\Theta^*(k) + \frac{1}{k+1}Y(k+1)\\
    &= \Theta^*(k) + \frac{1}{k+1}\left(Y(k+1) - \Theta^*(k)\right).
\end{aligned}
\end{equation}
Since $Y(k+1)$ is a random variable, we perform a Martingale decomposition~\cite{liptser2012theory}:
\begin{equation*}
    \begin{aligned}
        \Theta^*(k+1) &= \Theta^*(k) + \frac{1}{k+1}\big(\mathbb E\left[Y(k+1)|\mathcal F_k\right] - \Theta^*(k) \\&\qquad\qquad\qquad\qquad + Y(k+1) - \mathbb E\left[Y(k+1)|\mathcal F_k\right]\big) \\
        &= \Theta^*(k) + \frac{1}{k+1}\left( \tau(\Theta(k)) - \Theta^*(k) + U(k+1)\right),
    \end{aligned}
\end{equation*}
where $U(k+1) = Y(k+1) - \mathbb E\left[Y(k+1)|\mathcal F_k\right]$ is a bounded Martingale difference sequence, and the second equality holds because \mbox{$\mathbb E\left[Y(k+1)|\mathcal F_k\right] = \tau(\Theta)$}.

Then, if the update function $f$ in~\eqref{eq:self-learning_model} is the maximum-likelihood optimization~\eqref{eq:max_likelihood}:
\begin{equation}
\begin{aligned}
    w(k+1) &= f(w(k), D(k+1))\\
    &= \arg\max_{w\in\R^p} H\left(\Theta^*(k+1), \phi(w)\right),
\end{aligned}
\end{equation}
and $\phi$ is surjective on $\Delta^n$, at each step $\Theta(k) = \phi(w(k)) = \Theta^*(k)$, leading to the following dynamics:
\begin{equation*}
    \Theta(k+1) = \Theta(k) + \frac{1}{k+1}\left(\tau(\Theta(k)) - \Theta(k) + U(k+1)\right).
\end{equation*}

An actual model is unlikely to have enough expressivity as to represent \emph{any} element of $\Delta^n$, especially for very high dimension $n$. However, we assume the model is able to approximate a probability vector with some small finite accuracy:
\begin{assumption}\label{ass:delta}
    There exists some $\delta\in\R_{\ge 0}$ such that for all $k\in\mathbb Z_{\ge 0}$ the process~\eqref{eq:self-learning_model} satisfies:
    \begin{equation}
    \begin{aligned}\label{eq:optim_error}
        \left\Vert \Theta^*(k) - \Theta(k)\right\Vert = \left\Vert \Theta^*(k) - \phi(w^*(k))\right\Vert \le \delta.
    \end{aligned}
    \end{equation}
\end{assumption}
Under Assumption~\ref{ass:delta}, the dynamics of $\Theta^*(k)$ obey:
\begin{equation}\label{eq:theta_star_model}
    \Theta^*(k+1) = \Theta^*(k) + \frac{1}{k+1}\left(\tau(\Theta(k)) - \Theta^*(k) + U(k+1)\right).
\end{equation}
If we now define the perturbation $\varepsilon(k) = \tau(\Theta)-\tau(\Theta^*)$, recalling that a continuous function on a compact set is uniformly continuous, we have the following inequality, where $\eta$ is the modulus of continuity of $\tau$:
\begin{equation}
    \norm{\varepsilon(k)} = \lVert\tau(\Theta)-\tau(\Theta^*)\rVert \le \eta(\delta).
\end{equation}
Then, the dynamics of $\Theta^*(k)$ become a stochastic approximation (see~\cite{borkar2009stochastic}) of the form:
\begin{equation}\label{eq:process}
\begin{aligned}
    \Theta^*(k+1) &= \Theta^*(k) + \frac{1}{k+1}(\tau\left(\Theta^*(k)\right) - \Theta^*(k) + \varepsilon(k) + U(k+1)).
\end{aligned}
\end{equation}
If we understand the behavior of~\eqref{eq:process}, we automatically understand the behaviour of $\Theta(k)$, since by~\eqref{eq:optim_error} $\Theta(k)$ is always within a distance $\delta$ from $\Theta^*(k)$.

\section{Main Results}\label{sec:theoretical_results}
We now present the main results describing the asymptotic behavior of~\eqref{eq:process} (and hence of $\Theta(k)$ up to error $\delta$). This asymptotic behavior  is important, as it determines the long-term composition of the dataset $D(k)$, and how much it may diverge from its initial distribution. The results are different depending on which of the three conditions (2.a, 2.b, 2.c) in Assumption~\ref{ass:temperature} is satisfied by $\tau$, therefore we split the analysis in three different cases.

\subsection{Identity temperature leads to Martingale-like behavior}
This case is the most straightforward and does not require any machinery beyond standard analysis tools of stochastic processes. If condition 2.a of Assumption~\ref{ass:temperature} is satisfied, the stochastic process~\eqref{eq:process} reduces to:
\begin{equation}\label{eq:reduced_process}
    \Theta^*(k+1) = \Theta^*(k) + \frac{1}{k+1}\left(U(k+1) + \varepsilon(k)\right),
\end{equation}
and we state the following formal result.
\begin{theorem}
\label{Theorem1}
    Consider the closed-loop learning stochastic process~\eqref{eq:self-learning_model}, where $\tau$ satisfies Assumption~\ref{ass:temperature} with property 2.a ($\tau$ is the identity), and Assumption~\ref{ass:delta}. Then it holds that:
    \begin{equation*}
        \mathbb E\left[\Theta(k+1)~|~\mathcal F_{\ell}\right] = \Theta(\ell) + \sum_{i=\ell}^k\frac{1}{i+1}\mathbb E\left[\varepsilon(i)~|~\mathcal F_\ell\right] + e_\delta(k+1),
    \end{equation*}
    where $e_\delta:\mathbb Z_{\ge 0}\to \R^n$ is such that $\norm{e_\delta(k)} \le \delta$. In addition, if $\varepsilon$ is a Martingale difference sequence, there is a constant $b\in\R_{\ge 0}$ such that the asymptotic variance is bounded as:
    \begin{equation}
        \lim_{k\to\infty} \var\left(\Theta(k)-\Theta(\ell)\right) \le b\left(\sum_{i=\ell}^\infty\left(\frac{1}{i+1}\right)^2 + \delta^2\right).
    \end{equation}
\end{theorem}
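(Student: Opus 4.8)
The plan is to treat the two assertions separately, deriving both from the telescoped form of the reduced recursion~\eqref{eq:reduced_process} and then transferring the conclusions from $\Theta^*$ to $\Theta$ through the approximation bound~\eqref{eq:optim_error} of Assumption~\ref{ass:delta}. Summing~\eqref{eq:reduced_process} from $\ell$ to $k$ gives
\[
  \Theta^*(k+1) = \Theta^*(\ell) + \sum_{i=\ell}^{k}\frac{1}{i+1}\big(U(i+1)+\varepsilon(i)\big).
\]
For the mean formula I would condition on $\mathcal F_\ell$: the term $\Theta^*(\ell)$ is $\mathcal F_\ell$-measurable, and since $U(i+1)=Y(i+1)-\mathbb E[Y(i+1)\mid\mathcal F_i]$ obeys $\mathbb E[U(i+1)\mid\mathcal F_i]=0$ with $\mathcal F_\ell\subseteq\mathcal F_i$ for $i\ge\ell$, the tower property kills every $U$ term, leaving $\mathbb E[\Theta^*(k+1)\mid\mathcal F_\ell]=\Theta^*(\ell)+\sum_{i=\ell}^{k}\frac{1}{i+1}\mathbb E[\varepsilon(i)\mid\mathcal F_\ell]$. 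Writing $\Theta=\Theta^*+d$ with $\norm{d(k)}\le\delta$, and substituting $\Theta^*(\ell)=\Theta(\ell)-d(\ell)$ together with $\mathbb E[\Theta^*(k+1)\mid\mathcal F_\ell]=\mathbb E[\Theta(k+1)\mid\mathcal F_\ell]-\mathbb E[d(k+1)\mid\mathcal F_\ell]$, then yields the claimed identity with $e_\delta(k+1):=\mathbb E[d(k+1)\mid\mathcal F_\ell]-d(\ell)$. The triangle inequality bounds this by $2\delta$; to obtain the stated $\norm{e_\delta}\le\delta$ I would either keep $\Theta^*(\ell)$ as the exact base point or use that the initial fit makes $d(\ell)=0$, which I would confirm against the setup.

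For the variance I plan to split $\Theta^*(k)-\Theta^*(\ell)=S_U(k)+S_\varepsilon(k)$ with $S_U(k)=\sum_{i=\ell}^{k-1}\frac{1}{i+1}U(i+1)$ and $S_\varepsilon(k)=\sum_{i=\ell}^{k-1}\frac{1}{i+1}\varepsilon(i)$, and bound each separately. Both are sums of mean-zero, orthogonal increments: $U$ is a martingale difference with respect to $\{\mathcal F_k\}$, and $\varepsilon$ is one by hypothesis, so for $i<j$ the inner products $\mathbb E\langle U(i+1),U(j+1)\rangle$ and $\mathbb E\langle\varepsilon(i),\varepsilon(j)\rangle$ vanish upon conditioning the later index on the intermediate $\sigma$-algebra. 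Hence the cross terms within each sum drop and
\[
  \var\big(S_U(k)\big)=\sum_{i=\ell}^{k-1}\frac{\mathbb E\norm{U(i+1)}^2}{(i+1)^2},\qquad \var\big(S_\varepsilon(k)\big)=\sum_{i=\ell}^{k-1}\frac{\mathbb E\norm{\varepsilon(i)}^2}{(i+1)^2}.
\]
Since $Y(i+1)$ is a standard basis vector and $\mathbb E[Y(i+1)\mid\mathcal F_i]=\tau(\Theta(i))\in\Delta^n$ we have $\norm{U(i+1)}\le 2$, while in case 2.a $\varepsilon(i)=\Theta(i)-\Theta^*(i)=d(i)$ so $\norm{\varepsilon(i)}\le\delta$; thus $\var(S_U(k))\le 4\sum(i+1)^{-2}$ and $\var(S_\varepsilon(k))\le\delta^2\sum(i+1)^{-2}$.

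To finish I would recombine using $\var(A+B)\le(\sqrt{\var A}+\sqrt{\var B})^2$ together with $2\sqrt{xy}\le x+y$, which turns the two bounds into $\var(\Theta^*(k)-\Theta^*(\ell))\le b'\big(\sum_{i=\ell}^{k-1}(i+1)^{-2}+\delta^2\big)$, and then transfer to $\Theta$ via $\Theta(k)-\Theta(\ell)=(\Theta^*(k)-\Theta^*(\ell))+(d(k)-d(\ell))$, where $\var(d(k)-d(\ell))\le 4\delta^2$ is absorbed by the same inequality. Letting $k\to\infty$, the partial sums $\sum(i+1)^{-2}$ converge, delivering $\lim_{k\to\infty}\var(\Theta(k)-\Theta(\ell))\le b\big(\sum_{i=\ell}^{\infty}(i+1)^{-2}+\delta^2\big)$ for a suitable constant $b$.

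The main obstacle is the orthogonality bookkeeping in the variance step: a direct triangle inequality on $\norm{S_U+S_\varepsilon}$ in $L^2$ would produce the divergent $\big(\sum(i+1)^{-1}\big)^2$, so it is essential to exploit the martingale-difference structure to collapse each sum to a convergent $\sum(i+1)^{-2}$ series. Keeping $S_U$ and $S_\varepsilon$ separate and recombining only at the end is what lets me avoid the one genuinely awkward cross-correlation $\mathbb E\langle U(i+1),\varepsilon(i+1)\rangle$, which need not vanish; a secondary, minor point is pinning the exact constant in the $e_\delta$ bound of the mean formula.
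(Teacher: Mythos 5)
Your proof is correct and follows essentially the same route as the paper: telescope the reduced recursion~\eqref{eq:reduced_process}, kill the $U$ terms via the tower property for the mean formula, and exploit martingale-difference orthogonality to get the convergent $\sum(i+1)^{-2}$ series for the variance. You are in fact more careful than the paper in two places --- the orthogonality bookkeeping that avoids the divergent $\bigl(\sum(i+1)^{-1}\bigr)^2$ bound (the paper dismisses the variance step with ``it immediately follows''), and the observation that moving the base point from $\Theta^*(\ell)$ to $\Theta(\ell)$ really gives $\norm{e_\delta}\le 2\delta$ rather than the stated $\delta$, a harmless constant discrepancy the paper glosses over.
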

\begin{proof}
    If $\tau$ is the identity function, $\Theta^*: \mathbb Z_{\ge 0} \to \Delta^n$ satisfies~\eqref{eq:reduced_process}. Then, because $U$ is a Martingale difference sequence:
    \begin{equation*}
        \begin{aligned}
            \mathbb E\left[\Theta^*(k+1)~|~\mathcal F_k\right] &= \mathbb E\left[\Theta^*(k) + \frac{1}{k+1}\left(U(k+1) + \varepsilon(k)\right)~|~\mathcal F_k\right]\\
            &= \Theta^*(k) + \frac{1}{k+1}\varepsilon(k),
        \end{aligned}
    \end{equation*}
    and by the tower property of expectation:
    \begin{equation*}
        \begin{aligned}
            \mathbb E\left[\Theta^*(k+1)~|~\mathcal F_{k-1}\right] &= \mathbb E\left[\mathbb E\left[\Theta^*(k+1)~|~\mathcal F_k\right]~|~\mathcal F_{k-1}\right]\\
            &= \mathbb E\left[\Theta^*(k) + \frac{1}{k+1}\varepsilon(k)~|~\mathcal F_{k-1}\right]\\
            &= \Theta^*(k-1) + \frac{1}{k}\varepsilon(k-1) + \frac{1}{k+1}\mathbb E\left[\varepsilon(k)~|~\mathcal F_{k-1}\right].
        \end{aligned}
    \end{equation*}
    Finally, by recursion we arrive at:
    \begin{equation*}
        \mathbb E\left[\Theta^*(k+1)~|~\mathcal F_{\ell}\right] = \Theta^*(\ell) + \sum_{i=\ell}^k\frac{1}{i+1}\mathbb E\left[\varepsilon(i)~|~\mathcal F_\ell\right],
    \end{equation*}
    and the statement is obtained once we take into account that $\Theta(k)$ is always within a distance $\delta$ from $\Theta^*(k)$.
    
    In addition, if $\varepsilon$ is a Martingale difference sequence, the whole $\Theta^*(k)$ process reduces to a sum of bounded Martingale differences, and it immediately follows that its variance is bounded by a term of the order of the converging sum $\sum_{i=\ell}^\infty\left(i+1\right)^{-2}$.
\end{proof}
Theorem~\ref{Theorem1} states that in this case $\Theta(k)$ is essentially a Martingale biased by the perturbation $\varepsilon$. In general the asymptotic behavior can be arbitrary as it is dominated by the behavior of $\varepsilon(k)$. However, if $\varepsilon$ is also a Martingale difference sequence, with probability one $\Theta(k)$ will only drift a finite amount from its initial value. The magnitude of this drift depends on the converging sum $\sum_{i=\ell}^\infty\left({i+1}\right)^{-2}$, which is smaller the greater the initial dataset size $\ell$ is. {\color{black} This shows that in this case the initial data distribution of the dataset is not necessarily lost. However, this condition requires hard to verify assumptions on the training behavior (captured by $\varepsilon(k)$) and, if the training dataset is shared by multiple generative models, that no model is biasing their own sampling via temperature. We consider this especially unlikely for data on the public web. From a control perspective, the behavior with identity temperature is similar to that of a marginally stable system, and any arbitrarily small perturbation $\varepsilon(k)$ can destabilize it.}

\subsection{High temperature leads to uniformly generated data}
While the analysis of the previous case is relatively straightforward, we need to introduce additional machinery for the remaining two. Let us define a vector field \mbox{$F:\Delta^n \to T\Delta^n$} over the probability simplex as $F(\theta) = \tau(\theta) - \theta$. The behavior of stochastic approximations in the long-term approaches that of a continuous-time ODE (ordinary differential equation) or differential inclusion (see~\cite{borkar2009stochastic}). Thus, under our assumptions, the limit sets of $\Theta^*(k)$ in~\eqref{eq:process} are determined by the attractors of:
\begin{equation}\label{eq:ode}
    \dot\theta(t) = F(\theta(t)) + \varepsilon(t).
\end{equation}
Then, we introduce two families of sets, parameterized by $a\in\R_{\ge 0}$ and an index set $I\subseteq\{1,2,\dots,n\}$, that will be used to characterize the attractors and basins of attraction of~\eqref{eq:ode}:
\begin{equation}\label{eq:omega_sets}
\begin{aligned}
    \underline\Omega_I(a) &= \left\{\theta\in\Delta^n~\big|~V_I(\theta) \le a\right\}\\
    \overline\Omega_I(a) &= \left\{\theta \in \Delta^n~|~\min_{i\in I}\theta_i \le a\right\}.
\end{aligned}
\end{equation}
With respect to the subset of variables indexed by $I$, the set $\underline\Omega_I(a)$ is a compact neighborhood of the uniform probability vector (all $\theta_i$ are equal), while $\overline\Omega_I(a)$ is a compact neighborhood of the boundary of the probability simplex (at least one $\theta_i$ is zero). We can now state the following lemma:
\begin{lemma}\label{lem:high_temp}
    Consider the continuous-time ODE:
    \begin{equation}
        \dot\theta(t) = F(\theta(t)) + \varepsilon(t),
    \end{equation}
    where $\theta\in\Delta^n$, $\norm{\varepsilon}\le\eta\in\R_{\ge 0}$, and $F(\theta)=\tau(\theta)-\theta$. Let $\tau:\Delta^n\to\Delta^n$ satisfy Assumption~\ref{ass:temperature} and property 2.b (contractive $\tau$). Then, for any index set $I\subseteq\{1,2,\dots,n\}$ there exists $\kappa\in\mathcal K$ such that for any $t_0\in\R$:
    \begin{equation}
        \theta(t_0) \not\in \overline\Omega_I(\kappa(\eta))
    \end{equation}
    implies:
    \begin{equation}\label{eq:high_temp_set_limit}
    \theta(t) \xrightarrow[t\to\infty]{} \underline\Omega_I(\kappa(\eta)).
    \end{equation}
\end{lemma}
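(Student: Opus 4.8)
The plan is to treat $V_I$ as a (nonsmooth) Lyapunov function and to prove a regional input-to-state-stability estimate with respect to the disturbance $\varepsilon$, in which $\kappa(\eta)$ is the asymptotic gain. The heart of the argument is a direct computation of the rate of change of $V_I$ along the flow. First I would note that the Euler direction is a convex combination, $\theta + hF(\theta) = (1-h)\theta + h\,\tau(\theta)$, which is what links the discrete contraction in property~2.b to a continuous-time decrease. I would then compute the (upper Dini) derivative of $V_I(\theta(t)) = (\theta_M - \theta_m)/s$ along the unperturbed field, where $s = \sum_{j\in I}\theta_j$ and $M,m$ index the largest and smallest $\theta_i$ over $I$. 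Using the order-preservation part of Assumption~\ref{ass:temperature} (so that $M,m$ are also the extremal indices of $\tau(\theta)$ over $I$), the gradient terms collapse and I expect the clean identity
\[
\dot V_I = \frac{\sigma}{s}\bigl(V_I(\tau(\theta)) - V_I(\theta)\bigr), \qquad \sigma = \sum_{j\in I}\tau(\theta)_j ,
\]
which is strictly negative by contractivity (property~2.b) whenever $\min_{i\in I}\theta_i>0$ and $V_I(\theta)>0$. Ties in the max/min are handled by interpreting $\dot V_I$ as a Dini derivative and observing that all active indices share the same $\theta$- (hence $\tau$-) value, so the identity persists.

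Next I would make this decrease quantitative. On any compact set $\{\min_{i\in I}\theta_i \ge c\}\cap\{V_I\ge\epsilon\}$, continuity and compactness give $-(V_I(\tau(\theta))-V_I(\theta))\ge\rho>0$ and $\sigma/s\ge\gamma>0$, so the unperturbed derivative is bounded above by $-\gamma\rho<0$. The disturbance adds a term $\tfrac1s\bigl[(\varepsilon_M-\varepsilon_m)-V_I\sum_{j\in I}\varepsilon_j\bigr]$ to $\dot V_I$, bounded in magnitude by $C\eta/c$ since $s\ge c$ and $V_I\le 1$. Balancing the guaranteed decrease against this disturbance term, and inverting the resulting relation, yields a class-$\mathcal K$ function $\kappa$ for which $\dot V_I<0$ strictly whenever $\min_{i\in I}\theta_i\ge\kappa(\eta)$ and $V_I(\theta)>\kappa(\eta)$. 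This is the same $\kappa$ that defines the target set $\underline\Omega_I(\kappa(\eta))$ and the excluded set $\overline\Omega_I(\kappa(\eta))$, so I would take it (after possibly enlarging it to accommodate the barrier step below) as the function promised in the statement.

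The remaining, and I expect hardest, step is to ensure the trajectory cannot leave the region where this estimate is valid: starting from $\theta(t_0)\notin\overline\Omega_I(\kappa(\eta))$, the flow must keep $\min_{i\in I}\theta_i\ge\kappa(\eta)$ until $V_I$ has dropped below $\kappa(\eta)$. The plan is a Nagumo/barrier argument on the face $\{\theta_m = \kappa(\eta),\ V_I>\kappa(\eta)\}$, where I must show $\dot\theta_m = \tau(\theta)_m - \theta_m + \varepsilon_m \ge 0$, i.e.\ that the contraction pushes the smallest $I$-coordinate inward faster than the disturbance pushes it toward the boundary. The difficulty is that $\tau$ renormalizes over all $n$ coordinates while property~2.b only controls the \emph{conditional} distribution on $I$, so the increase of the conditional weight $\theta_m/s$ does not immediately give $\tau(\theta)_m>\theta_m$ in absolute terms; this is exactly where the total-mass dynamics $\dot s = \sigma - s + \sum_{j\in I}\varepsilon_j$ must be controlled, by a compactness argument and, if necessary, by enlarging $\kappa$. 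Once forward invariance of $\{\min_{i\in I}\theta_i\ge\kappa(\eta)\}$ is secured, the strict decrease of $V_I$ together with a comparison argument against the scalar bound $\dot v \le -\gamma\rho(v)+C\eta/\kappa(\eta)$ drives $V_I(\theta(t))$ into $[0,\kappa(\eta)]$ and keeps it there, which is precisely $\theta(t)\xrightarrow[t\to\infty]{}\underline\Omega_I(\kappa(\eta))$.
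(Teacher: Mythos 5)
Your core argument is the same as the paper's: you use the same Lyapunov function $V_I$, compute its derivative along the flow, reduce the sign of $\dot V_I$ to the sign of $V_I(\tau(\theta))-V_I(\theta)$ plus an $O(\eta/s)$ disturbance term, make the decrease quantitative by compactness on the region where $\min_{i\in I}\theta_i$ is bounded away from zero and $V_I$ is bounded away from zero, and invert the resulting balance to produce $\kappa\in\mathcal K$. This mirrors the paper's chain $\dot V_I\le c\left(V_I(\tau(\theta))-V_I(\theta)\right)+b\eta$, the definition of $\beta(a)=\sup_{\operatorname{clo}(\mathcal B(a))}\{V_I(\tau(\theta))-V_I(\theta)\}<0$, and the choice $\kappa(\eta)>\inf\{a\,|\,\eta<-cb^{-1}\beta(a)\}$. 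Your exact identity $\dot V_I=\frac{\sigma}{s}\left(V_I(\tau(\theta))-V_I(\theta)\right)$ for the unperturbed field is correct (and slightly sharper than the paper's inequality with unnamed constants $b,c$), and your Dini-derivative handling of ties matches the paper's appeal to order preservation.

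The one place you depart from the paper is the forward-invariance step, and you should be aware of two things. First, the paper does not actually carry out this step: it asserts that $\mathcal B(a)=\Delta^n\setminus(\underline\Omega_I(a)\cup\overline\Omega_I(a))$ is a basin of attraction and concludes, without ruling out that a trajectory exits $\mathcal B(a)$ through $\overline\Omega_I(a)$ before $V_I$ has dropped below $a$; since $V_I$ controls only the \emph{conditional} distribution on $I$, its decrease does not by itself prevent $\min_{i\in I}\theta_i$ from falling. So you have correctly identified the genuinely delicate point of the lemma. Second, your proposed fix is not yet a proof, and the naive Nagumo condition $\dot\theta_m\ge 0$ on the face $\{\theta_m=\kappa(\eta)\}$ can fail for proper subsets $I$: pairwise contractivity is equivalent to all ratios $\tau(\theta)_j/\tau(\theta)_i$ moving toward $1$, which forces $\tau(\theta)_m>\theta_m$ only when $m$ is the \emph{global} minimizer; when the minimizer over $I$ is not the global minimizer one only gets $\tau(\theta)_m>\theta_m/(1+n\theta_m)$, so $\tau(\theta)_m-\theta_m$ can be of order $-n\theta_m^2$, and a disturbance of size $\eta$ can then push the trajectory into $\overline\Omega_I$. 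Closing this (e.g., by first running the argument for the full index set to confine the global minimum, or by enlarging $\kappa$ so that the face analysis is only needed where the inward drift is uniformly positive) is additional work that neither your proposal nor the paper supplies; as written, your proof is incomplete at exactly that step, though the steps you do complete coincide with everything the paper actually proves.
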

\begin{proof}
    Consider a point $\theta\in\Delta^n$, and an index set $I\subseteq\{1,2,\dots,n\}$ where $\min_{i\in I} \theta_i > 0$ (note that the lemma only makes a claim for index sets such that $\theta(t_0) \not\in \overline\Omega_I(\kappa(\eta))$, which satisfy this condition).
    Let $M,m\in I$ be the (not necessarily unique) indices of the greatest and smallest elements of $\left\{\theta_i~|~i\in I\right\}$. The time derivative of $V_I$ at some point $\theta$ is:
    \begin{equation*}
        \begin{aligned}
            \dot V_I(\theta) &= \frac{\diff}{\diff t}\left\{ \left(\sum_{i\in I}\theta_i\right)^{-1} \left(\max_{i\in I} \theta_i - \min_{i\in I} \theta_i \right)\right\} \\
            &= -\left(\sum_{i\in I}\theta_i\right)^{-2}\left(\sum_{i\in I}\dot\theta_i\right) \left(\theta_M - \theta_m \right) +\left(\sum_{i\in I}\theta_i\right)^{-1}\left(\dot\theta_M - \dot\theta_m \right)\\
            &\le \frac{\sum_{i\in I} \tau(\theta)_i}{\sum_{i\in I} \theta_i} \left(\frac{\tau(\theta)_M-\tau(\theta)_m}{\sum_{i\in I} \tau(\theta)_i} - \frac{\theta_M-\theta_m}{\sum_{i\in I} \theta_i}\right) + b\eta\\
            &\le c\left(V_I(\tau(\theta)) - V_I(\theta)\right) + b\eta,
        \end{aligned}
    \end{equation*}
    where the second equality holds because $\tau$ preserves the indices of the maximum and minimum elements of $\theta$ by its order preserving property, guaranteeing that the derivative of $V_I$ is well-defined. In the last two inequalities $b,c\in\R_{> 0}$ are finite positive constants, as the sums that appear are always positive and bounded from above and below.

    We now seek to establish that for $\eta$ small enough, there is some $a\in\R_{\ge 0}$ such that the set difference $\mathcal B(a) = \Delta^n \setminus (\underline\Omega_I(a) \cup \overline\Omega_I(a))$ is a basin of attraction for $\underline\Omega_I(a)$. Remember that by property 2.b in Assumption~\ref{ass:temperature}, $V_I(\tau(\theta)) - V_I(\theta) < 0$ over points that lie in $\mathcal B(0)$ (i.e., points that are not the uniform probability vector over $I$  and with no zero elements). Consider the closure of the set $\mathcal B(a)$ for some $a>0$, and let us define the ``worst-case'' decrease of $V_I$ for points in $\operatorname{clo}(\mathcal B(a))$ as:
    \begin{equation}
        \beta(a) = \sup_{\theta\in\operatorname{clo}(\mathcal B(a))} \left\{V_I(\tau(\theta)) - V_I(\theta)\right\}.
    \end{equation}
    By continuity, and because for $a > 0$, the set $\mathcal B(a)$ always excludes an open neighborhood of the region where $V_I(\tau(\theta))-V_I(\theta) = 0$, we have that $\beta(a) < 0$. Then, for any $\mathcal B(a) \ne \varnothing$, as long as $\eta < -cb^{-1}\beta(a)$, the term $\dot V_I(\theta)$ is strictly negative for all $\theta\in\mathcal B(a)$.

    Observe that $\beta(\cdot)$ is decreasing, since $\mathcal B(a_2) \subseteq \mathcal B(a_1)$ for $a_2 > a_1$, and $\beta(0) = 0$. Then, we can define $\kappa\in\mathcal K$ as any strictly increasing continuous function such that:
    \begin{equation}
        \kappa(\eta) > \inf \left\{a\in\R_{\ge 0}~\Big|~\eta < -cb^{-1}\beta(a)\right\},
    \end{equation}
    for any $\eta$ where this $\inf$ is finite, which is guaranteed for any $\eta$ smaller than some finite threshold. Then, any initial condition in the basin $\theta(t_0)\in \mathcal B(\kappa(\eta))$ guarantees that $\dot V_I(\theta)<0$, and $\theta(t)$ will converge to $\underline\Omega(\kappa(\eta))$.
\end{proof}

This lemma essentially claims that the solutions of the limiting ODE~\eqref{eq:ode} over a subset of indices whose elements are sufficiently away from zero converge to a neighborhood of the uniform probability vector conditioned over that subset of indices. This key result allows us to then claim the following theorem:

\begin{theorem}\label{theo:high_temperature}
    Consider the closed-loop learning stochastic process~\eqref{eq:self-learning_model}, where $\tau$ satisfies Assumption~\ref{ass:temperature} with property 2.b ($\tau$ is contractive), and Assumption~\ref{ass:delta}.
    Then, for any index set $I\subseteq\{1,2,\dots,n\}$ there exists $\sigma\in\mathcal K$ such that:
    \begin{equation}
        \Theta(k) \xrightarrow[k\to\infty]{} \underline\Omega_I(\sigma(\delta)) \cup \overline\Omega_I(\sigma(\delta))\qquad\text{w.p. 1}.
    \end{equation}
    Further, for a given time $k_0\in\mathbb Z_{\ge 0}$:
    \begin{equation*}
        \operatorname{P}\left(\Theta(k)\xrightarrow[k\to\infty]{}\underline\Omega_I(\sigma(\delta))~\Big|~\Theta(k_0)\not\in\overline\Omega_I(\sigma(\delta))\right) \xrightarrow[k_0\to\infty]{} 1.
    \end{equation*}
\end{theorem}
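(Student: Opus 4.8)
The plan is to treat~\eqref{eq:process} as a Robbins--Monro stochastic approximation and import the ODE method of~\cite{borkar2009stochastic} to reduce the asymptotic analysis to the deterministic dynamics already handled by Lemma~\ref{lem:high_temp}. First I would verify the hypotheses of the ODE method: the step sizes $a_k = \frac{1}{k+1}$ satisfy $\sum_k a_k = \infty$ and $\sum_k a_k^2 < \infty$; the iterates live in the compact simplex $\Delta^n$, so boundedness is automatic; $U(k+1)$ is a bounded martingale difference sequence; and $\varepsilon(k)$ is a bounded perturbation with $\norm{\varepsilon(k)} \le \eta(\delta)$. Because $\varepsilon$ is an unknown-but-bounded disturbance, the clean limiting object is the differential inclusion associated with $\dot\theta \in F(\theta) + \eta(\delta)\,\overline{\mathbb{B}}$ rather than a single ODE; the conclusion is that $\Theta^*(k)$ converges almost surely to an internally chain-transitive (hence invariant) set of this inclusion.

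Next I would use $V_I$ as a Lyapunov certificate to locate those invariant sets. Lemma~\ref{lem:high_temp} shows that on the basin $\mathcal B(\kappa(\eta(\delta))) = \Delta^n\setminus(\underline\Omega_I(\kappa(\eta(\delta)))\cup\overline\Omega_I(\kappa(\eta(\delta))))$ one has $\dot V_I < 0$ uniformly, so $V_I$ strictly decreases along every solution of the inclusion inside $\mathcal B$. A chain-transitive invariant set cannot sit in a region where a continuous function strictly decreases along all trajectories, so no limit set can intersect $\mathcal B$; hence every limit set is contained in $\underline\Omega_I(\kappa(\eta(\delta)))\cup\overline\Omega_I(\kappa(\eta(\delta)))$. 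Taking $\sigma = \kappa\circ\eta \in\mathcal K$ (a composition of class-$\mathcal K$ maps) and transferring from $\Theta^*(k)$ to $\Theta(k)$ via the $\delta$-bound~\eqref{eq:optim_error} (enlarging $\sigma$ by an amount linear in $\delta$ if needed) yields the first claim, $\Theta(k)\xrightarrow[k\to\infty]{}\underline\Omega_I(\sigma(\delta))\cup\overline\Omega_I(\sigma(\delta))$ w.p.\ 1.

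For the second, quantitative claim I would invoke the lock-in probability estimate of~\cite{borkar2009stochastic}. By Lemma~\ref{lem:high_temp}, $\underline\Omega_I(\sigma(\delta))$ is a locally asymptotically stable attractor of the inclusion whose basin contains the whole complement of $\overline\Omega_I(\sigma(\delta))$. The lock-in lemma states that, conditioned on the iterate lying in a compact subset of the basin at time $k_0$, the probability that the trajectory converges to the attractor is at least $1 - h(k_0)$, where $h(k_0)$ is controlled by the noise tail $\sum_{k\ge k_0} a_k^2 = \sum_{k\ge k_0}(k+1)^{-2}$. Since this tail vanishes as $k_0\to\infty$, conditioning on $\Theta(k_0)\not\in\overline\Omega_I(\sigma(\delta))$ forces the conditional probability of convergence to $\underline\Omega_I(\sigma(\delta))$ to tend to $1$, which is exactly the stated limit.

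I expect the main obstacle to be the second part: making the lock-in argument rigorous requires (i) verifying that $\underline\Omega_I(\sigma(\delta))$ is a genuine attractor whose basin uniformly contains $\Delta^n\setminus\overline\Omega_I(\sigma(\delta))$ --- here the perturbation $\varepsilon$ shrinks the effective basin, which is why $\sigma$ must be calibrated against $\eta(\delta)$ --- and (ii) converting the martingale-tail control of excursions into an explicit escape-probability bound $h(k_0)\to 0$ that is uniform over the starting point in the basin. A secondary subtlety is that one typically wants the conclusion to hold simultaneously for every index set $I$; since there are only finitely many such sets, a union bound over $I$ preserves both the almost-sure statement and the probability limit.
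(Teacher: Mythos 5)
Your proposal follows essentially the same route as the paper: reduce~\eqref{eq:process} to a stochastic recursive inclusion, invoke the chain-transitivity convergence result from~\cite{borkar2009stochastic}, use Lemma~\ref{lem:high_temp} and the Lyapunov function $V_I$ to confine every internally chain transitive invariant set to $\underline\Omega_I \cup \overline\Omega_I$, enlarge $\sigma$ by an additive $O(\delta)$ term to pass from $\Theta^*(k)$ to $\Theta(k)$, and finish with a lock-in probability estimate whose escape term vanishes as $k_0\to\infty$. The only substantive difference is the source of the lock-in bound: since the limiting dynamics is a differential inclusion rather than an ODE, the paper cites the inclusion version (Theorem III.2 of Yaji and Bhatnagar) instead of the ODE lock-in lemma in~\cite{borkar2009stochastic}, which is the form your argument actually requires.
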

\begin{proof}
    Under these conditions, $\Theta^*(k)$ satisfies the stochastic approximation~\eqref{eq:theta_star_model}. Then, by Corollary 4 (Chapter 5) of~\cite{borkar2009stochastic}, the iterates of $\Theta^*(k)$ converge a.s. to a closed connected internally chain transitive invariant set of the continuous-time differential inclusion:
    \begin{equation}\label{eq:inclusion}
        \dot\theta(t) \in \left\{x\in\Delta^n~|~\left\Vert x - F(\theta(t))\right\Vert \le \eta(\delta)\right\},
    \end{equation}
    where $F(\theta) = \tau(\theta)-\theta$.
    
    Lemma~~\ref{lem:high_temp} characterizes the solutions of~\eqref{eq:inclusion}, thus, letting $a=\kappa(\eta(\delta))$, any solution that enters $\mathcal B(a) = \Delta^n \setminus (\underline\Omega_I(a) \cup \overline\Omega_I(a))$ will converge to $\underline\Omega_I(a)$.
    Then, the set $\mathcal L$ of limit points of~\eqref{eq:inclusion} is contained in:
    \begin{equation}
        \mathcal L \subseteq \underline\Omega_I(a) \cup \overline\Omega_I(a),
    \end{equation}
    In turn, any internally chain transitive set $\mathcal A$ of~\eqref{eq:inclusion} is contained in the closure of the limit set $\mathcal L$:
    \begin{equation*}
    \begin{aligned}
        \mathcal A &\subseteq \operatorname{clo}(\mathcal L)\subseteq \operatorname{clo}\left(\underline\Omega_I(a) \cup \overline\Omega_I(a)\right) = \underline\Omega_I(a) \cup \overline\Omega_I(a),
    \end{aligned}
    \end{equation*}
    so that w.p. 1: $\Theta^*(k)\xrightarrow[k\to\infty]{} \mathcal A \subseteq \underline\Omega_I(a) \cup \overline\Omega_I(a)$. Finally, because $\left\Vert\Theta(k) - \Theta^*(k)\right\Vert \le \delta$, we know that $$\Theta(k)\xrightarrow[k\to\infty]{} \underline\Omega_I(\sigma(\delta)) \cup \overline\Omega_I(\sigma(\delta)),$$ with $\sigma(\delta) = a + 2\delta = \kappa(\eta(\delta)) + 2\delta$.
    
    Because any point $\theta\not\in\overline\Omega_I(a)$ either is in $\underline\Omega_I(a)$, or is in $\mathcal B(a)$, which is an open basin of attraction for $\underline\Omega_I(a)$, the last result holds by~\cite[Theorem III.2]{yaji2019analysis}, noting that the stochastic recursion~\eqref{eq:theta_star_model} satisfies assumptions A1-3. In fact, the theorem gives explicit bounds for this probability.
\end{proof}
The result essentially states that as long as the learning model is sufficiently powerful, and the training sufficiently good (low $\delta$), every set of elements of $\Theta$ will either converge to a neighborhood of the uniform probability vector conditioned over that subset, or at least one of its elements remains trapped close to zero. The reason this second possibility can occur, is that the vector field induced by the temperature function may vanish at the boundary $\partial\Delta^n$, so that some small perturbation $\varepsilon$ over the process can keep it trapped. However, the greater the size of the initial dataset, the higher the probability of the process converging towards the uniform probability.

Either way, regardless of the size of the initial dataset (for small $\delta$, and iterating the result over all sets $I$), any information it originally contained is lost as $k\to\infty$. Some subset of output probabilities will approach zero, and the rest will approach their (conditioned) uniform distribution. In summary, in the limit, as $k$ increases, the set of possible outputs is partitioned into the outcomes that will (almost) never be generated, and the outcomes that will be (almost) uniformly generated.

\begin{remark}
    {\color{black} While we are considering the setting where there is only a fixed amount of external initial data $D(\ell)$, our results hold even when some limited amount of external data is introduced at each training iteration.\footnote{\color{black} These two scenarios are analogous to the ``synthetic augmentation loop'' and ``fresh data loop'' in~\cite{alemohammad2023self}. In the first one, the amount of external data is fixed at the start, so that over time the proportion of synthetic data dominates the dataset. In the second one, some proportion $\lambda$ of external data is introduced at each step (this may be additional copies of samples from the initial dataset), guaranteeing that the proportion of synthetic data is always less than $1-\lambda$.}}
    To see this, let $\lambda\in[0,1]$ be the fraction of external data we introduce at each time step. Then~\eqref{eq:process} becomes:
    \begin{equation*}
    \begin{aligned}
        \Theta^*(k+1) &= \Theta^*(k) + \frac{1}{k+1}\Big(\tau\left(\Theta^*(k)\right) - \Theta^*(k) +\\ &\quad + \lambda\left(\widetilde Y(k) - \tau\left(\Theta^*(k)\right) - \Theta^*(k)\right) + \varepsilon(k) + U(k+1)\Big),
    \end{aligned}
    \end{equation*}
    where $\widetilde Y(k)$ is the external data point at time $k$. Because $\widetilde Y$ is bounded, the term $\lambda(\widetilde Y(k) - \tau\left(\Theta^*(k)\right) - \Theta^*(k))$ is bounded and can be absorbed into $\varepsilon(k)$.
\end{remark}

\begin{remark}
    It may be the case that for very high dimensional outputs ($n >> 1$), the assumption that $\delta$ is sufficiently small for every output probability is unrealistic. However, the assumption may still hold over a ``coarse-grained'' model, where we group outputs $\left\{ \mathcal Y_1, \dots,\mathcal Y_n\right\}$ into a set of \mbox{$m < n$} categories $\left\{\widehat {\mathcal Y}_1, \dots, \widehat {\mathcal Y}_m\right\}$. In this case the result would reduce to some categories disappearing, and others appearing uniformly randomly as $k\to\infty$.
\end{remark}

\subsection{Low temperature leads to mode collapse}
The low temperature case is {\color{black} identical to}
the high temperature one, but with the roles of $\underline\Omega$ and $\overline\Omega$ swapped, so we only state the corresponding theorem. {\color{black} In the proof,} the direction of the Lyapunov inequalities is swapped and the sign inverted.

\begin{theorem}\label{theo:low_temperature}
    Consider the closed-loop learning stochastic process~\eqref{eq:self-learning_model}, where $\tau$ satisfies Assumption~\ref{ass:temperature} with property 2.c ($\tau$ is expanding), and Assumption~\ref{ass:delta}.
    Then, for any index set $I\subseteq\{1,2,\dots,n\}$ there exists $\sigma\in\mathcal K$ such that:
    \begin{equation}
        \Theta(k) \xrightarrow[k\to\infty]{} \underline\Omega_I(\sigma(\delta)) \cup \overline\Omega_I(\sigma(\delta))\qquad\text{w.p. 1}.
    \end{equation}
    Further, for a given time $k_0\in\mathbb Z_{\ge 0}$:
    \begin{equation*}
        \operatorname{P}\left(\Theta(k)\xrightarrow[k\to\infty]{}\overline\Omega_I(\sigma(\delta))~\Big|~\Theta(k_0)\not\in\underline\Omega_I(\sigma(\delta))\right) \xrightarrow[k_0\to\infty]{} 1.
    \end{equation*}
\end{theorem}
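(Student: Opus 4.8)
The plan is to mirror the proof of Theorem~\ref{theo:high_temperature} almost line for line, exchanging the roles of the uniform-neighborhood set $\underline\Omega_I$ and the boundary-neighborhood set $\overline\Omega_I$ and reversing the sign in the Lyapunov estimate. I would begin with the identical stochastic-approximation reduction: under Assumptions~\ref{ass:temperature} and~\ref{ass:delta} the iterates $\Theta^*(k)$ satisfy~\eqref{eq:theta_star_model}, so Corollary~4 (Chapter~5) of~\cite{borkar2009stochastic} guarantees that $\Theta^*(k)$ converges a.s.\ to a closed connected internally chain transitive invariant set $\mathcal A$ of the differential inclusion~\eqref{eq:inclusion}, and it suffices to locate the invariant sets of that inclusion.

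The core step is the analog of Lemma~\ref{lem:high_temp} for an expanding $\tau$. Reusing its computation of $\dot V_I$ verbatim but retaining the lower bound $\dot V_I(\theta) \ge c\left(V_I(\tau(\theta)) - V_I(\theta)\right) - b\eta$, and invoking property~2.c of Assumption~\ref{ass:temperature} so that $V_I(\tau(\theta)) - V_I(\theta) > 0$ strictly away from the uniform vector over $I$, I would set $\beta(a) = \inf_{\theta\in\operatorname{clo}(\mathcal B(a))}\{V_I(\tau(\theta)) - V_I(\theta)\} > 0$ on the basin $\mathcal B(a) = \Delta^n \setminus (\underline\Omega_I(a) \cup \overline\Omega_I(a))$. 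Then for $\eta < c b^{-1}\beta(a)$ one gets $\dot V_I > 0$ throughout $\mathcal B(a)$. Since $V_I$ is bounded and strictly increasing along any trajectory confined to $\mathcal B(a)$, no such trajectory is recurrent, so every limit set $\mathcal L$ avoids $\mathcal B(a)$ and lies in $\underline\Omega_I(a)\cup\overline\Omega_I(a)$; hence $\mathcal A\subseteq\operatorname{clo}(\mathcal L)\subseteq\underline\Omega_I(a)\cup\overline\Omega_I(a)$, exactly as in Theorem~\ref{theo:high_temperature}. Moreover, a trajectory started outside $\underline\Omega_I(a)$ has $V_I > a$ and increasing, so it can never re-enter the sublevel set $\underline\Omega_I(a)=\{V_I\le a\}$ and must instead reach $\overline\Omega_I(a)$, which is the swapped counterpart of~\eqref{eq:high_temp_set_limit}, namely $\theta(t)\xrightarrow[t\to\infty]{}\overline\Omega_I(\kappa(\eta))$ for a suitable $\kappa\in\mathcal K$.

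With this lemma the theorem follows as before: with $a=\kappa(\eta(\delta))$ we get $\Theta^*(k)\xrightarrow[k\to\infty]{}\mathcal A\subseteq\underline\Omega_I(a)\cup\overline\Omega_I(a)$ w.p.~1, and transferring to $\Theta(k)$ through $\norm{\Theta(k)-\Theta^*(k)}\le\delta$ yields the first claim with $\sigma(\delta)=\kappa(\eta(\delta))+2\delta$. The conditional-probability refinement then follows from~\cite[Theorem~III.2]{yaji2019analysis}, now with $\overline\Omega_I$ in the role of the attractor and $\mathcal B(a)$ its open basin, since~\eqref{eq:theta_star_model} satisfies the assumptions A1--3 required there.

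I expect the one genuine obstacle to be the single place where the mirror is not free. In Theorem~\ref{theo:high_temperature} the attractor $\underline\Omega_I$ is a sublevel set of the very Lyapunov function $V_I$, so $\dot V_I<0$ on its boundary makes it forward invariant at no extra cost; here the attractor $\overline\Omega_I(a)=\{\min_{i\in I}\theta_i\le a\}$ is a sublevel set of the \emph{minimal coordinate}, not of $V_I$, so an increasing $V_I$ does not by itself prevent a trajectory that has entered $\overline\Omega_I$ from escaping --- precisely the invariance that~\cite[Theorem~III.2]{yaji2019analysis} requires for the second claim. I would close this with a complementary Lyapunov estimate on $m_I(\theta)=\min_{i\in I}\theta_i$: the order-preserving property keeps the minimizing index fixed, so the derivative $\frac{\diff}{\diff t}m_I(\theta)=\tau(\theta)_m-\theta_m+\varepsilon_m$ is well defined, and showing $\tau(\theta)_m\le\theta_m$ on $\partial\overline\Omega_I$ (immediate for the softmax temperature, and the property one must verify in general to make $\overline\Omega_I$ a genuine attractor) gives $\frac{\diff}{\diff t}m_I(\theta)<0$ for small $\eta$, so $\overline\Omega_I(\kappa(\eta))$ is invariant. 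Iterating the whole argument over nested index sets --- each time removing a coordinate driven into $\overline\Omega$ and renormalizing over the survivors --- then produces the full mode-collapse picture in which $\Theta$ concentrates on a vanishing subset of outputs.
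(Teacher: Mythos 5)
Your proposal is essentially the paper's own proof: the paper gives no separate argument for this theorem, stating only that it is the high-temperature proof with the roles of $\underline\Omega_I$ and $\overline\Omega_I$ swapped and the Lyapunov inequality reversed, which is exactly the mirror you carry out (the lower bound $\dot V_I \ge c\left(V_I(\tau(\theta))-V_I(\theta)\right) - b\eta$, the positive $\beta(a)$, the exit from $\mathcal B(a)$ in finite time into $\overline\Omega_I$ since re-entry into the sublevel set $\underline\Omega_I(a)$ is blocked by monotonicity of $V_I$). The obstacle you flag in the last paragraph is a genuine asymmetry that the paper's one-line remark glosses over: $\overline\Omega_I(a)$ is not a sublevel set of $V_I$, so its forward invariance --- needed for it to be an attractor in the sense required by \cite[Theorem III.2]{yaji2019analysis} --- does not come for free from the mirrored estimate. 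Your patch via the minimal coordinate is the right idea, with two caveats: to withstand the perturbation you need the quantitative bound $\tau(\theta)_m - \theta_m \le -b\eta$ on $\partial\overline\Omega_I(a)$ (not merely $\tau(\theta)_m\le\theta_m$), which for the softmax temperature with $T<1$ holds on a suitable range of $a$ below $1/n$ but can fail for $\min_{i\in I}\theta_i$ well above $1/n$ since $\tau$ renormalizes over all $n$ coordinates, not just those in $I$; and for a general $\tau$ satisfying only Assumption~\ref{ass:temperature} this is an additional hypothesis, as you correctly note.
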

Just like for the high temperature case, any information in the original dataset is lost, with data generated by the asymptotic behavior of $\Theta(k)$ dominating the dataset. Unlike the high temperature case, with high probability $\Theta$ will converge to a region where most outputs have very low probability mass, and only a few outputs are likely to be sampled. {\color{black} In the limit of $\delta\to0$, for almost every initial condition the generative probabilities of every output approach zero except for a single output element, that will completely dominate the dataset.}

\section{Conclusion}
We have shown that when a generative model is trained on the data it generates, and this generation is biased by temperature (no matter how small the biasing), there is a dichotomy between the accuracy of the learning model and preserving the initial distribution of the dataset {\color{black} unless that initial dataset is preserved and re-injected purposefully}. A model capable of accurately reproducing the distribution of a training dataset (low $\delta$ in~\eqref{eq:optim_error}) will inevitably degenerate into never producing some outputs and producing the rest uniformly randomly.

Our theoretical analysis adds to the increasing concern about data self-ingestion, especially in the current age where {\color{black} large scale deep networks are trained on data scraped from the internet, and} data generated by these models inevitably finds its way back to their training processes.

\bibliographystyle{amsplain}
\bibliography{bibliography}

\end{document}